\pgfplotsset{compat=1.14}
\definecolor{Gray}{gray}{0.9}
\definecolor{Green}{RGB}{152,251,152}
\definecolor{DarkRed}{RGB}{192,0,0}
\definecolor{DarkGreen}{RGB}{84,130,53}
\definecolor{DarkBlue}{RGB}{68,114,196}
\definecolor{DarkOrange}{RGB}{255,192,0}
\newcolumntype{C}{>{\Centering\arraybackslash}X} 
\theoremstyle{plain}
\newtheorem{theorem}{Theorem}[section]
\theoremstyle{definition}
\theoremstyle{remark}
\DeclareMathOperator*{\argmin}{\arg\min}
\DeclareMathOperator*{\argmax}{\arg\max}
\newcommand{\calX}{\mathcal{X}}
\newcommand{\calY}{\mathcal{Y}}
\newcommand{\calO}{\mathcal{O}}
\newcommand{\calQ}{\mathcal{Q}}
\newcommand{\calT}{\mathcal{T}}
\newcommand{\calU}{\mathcal{U}}
\newcommand{\calST}{\mathcal{S}_{\calT}}
\newcommand{\calVT}{\mathcal{V}_{\mathcal{T}}}
\newcommand{\RT}{R_{\calT}}
\newcommand{\haty}{\hat{y}}
\newcommand{\hatp}{\hat{p}}
\newcommand{\sety}{\hat{Y}}
\newcommand{\setv}{\hat{V}}
\newcommand{\hatP}{\hat{P}}
\newcommand{\bbR}{\mathbb{R}}
\newcommand{\bbP}{\mathbb{P}}
\newcommand{\bbmI}{\mathbbm{1}}
\newcommand{\bx}{\boldsymbol{x}}
\newcommand{\powerset}{2^{\mathcal{Y}}}
\begin{document}

%

%

\twocolumn[

\aistatstitle{Conformal Prediction in Hierarchical Classification with Constrained Representation Complexity}

\aistatsauthor{Thomas Mortier$^{1,5}$ \And Alireza Javanmardi$^{2,3}$ \And  Yusuf Sale$^{2,3}$ \And Eyke H\"ullermeier$^{2,3,4}$ \And Willem Waegeman$^{5}$}
\vspace{0.5em}
\aistatsaddress{$^{1}$Department of Environment, \\Ghent University \And  
$^{2}$Institute of Informatics, \\LMU Munich \And 
$^{3}$MCML,\\ Munich}
\vspace{-1.5em}
\aistatsaddress{
$^{4}$DFKI (DSA), \\ Kaiserslautern \And
$^{5}$Dept. of Data Analysis and\\ Mathematical Modelling, Ghent University} ]

\begin{abstract}

Conformal prediction has emerged as a widely used framework for constructing valid prediction sets in classification and regression tasks. In this work, we extend the split conformal prediction framework to hierarchical classification, where prediction sets are commonly restricted to internal nodes of a predefined hierarchy, and propose two computationally efficient inference algorithms. The first algorithm returns internal nodes as prediction sets, while the second one relaxes this restriction. Using the notion of representation complexity, the latter yields smaller set sizes at the cost of a more general and combinatorial inference problem. Empirical evaluations on several benchmark datasets demonstrate the effectiveness of the proposed algorithms in achieving nominal coverage.

\end{abstract}


\section{Introduction}
\label{sec:intro}

In multi-class classification, a classifier can be uncertain about the predicted class label for a given test instance. In such cases, it can be beneficial to return set-valued predictions, i.e.\ sets of classes rather than individual classes. This is particularly relevant in hierarchical classification, where the class space is organised in a hierarchical structure, such as in medical diagnosis, where diseases are organised in a tree structure based on the International Classification of Diseases (ICD)~\citep{who78icd9}. 

In hierarchical classification, set-valued predictions are often restricted to internal nodes of the hierarchy. Such predictions have a clear semantic interpretation and can be computed using efficient inference algorithms~\citep{freitas07hc,bi15bohmlc,rangwala17lhc,yang17cautioushmc,wang21hierarchical,valmadre22hierarchical}. However, a restriction of this kind can affect efficiency when the classifier is uncertain between classes in different branches of the hierarchy. In these cases, predicting a single internal node typically yields large and uninformative sets. 

To address this, some approaches allow any subset of classes as predictions, which improves flexibility but comes at the cost of higher semantic complexity and reduced interpretability~\citep{oh17topkhc}. More recently, a set-based utility maximisation framework has been proposed that makes a compromise between the two aforementioned extremes by introducing the notion of representation complexity~\citep{mortier22svprc}. The assumption behind this notion is that inner nodes of a hierarchy have semantic meaning and can therefore be used as meaningful stand-alone predictions. By limiting the number of internal nodes used to represent a prediction, the proposed framework allows users to control the trade-off between efficiency and interpretability. 

\begin{figure}[t]
    \centering
    \includegraphics[width=0.9\columnwidth,height=0.4\columnwidth]{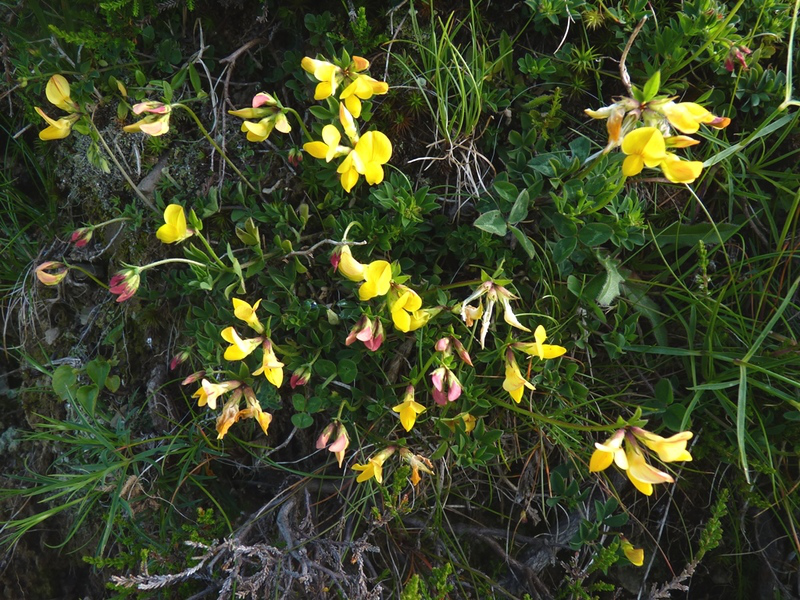}
    \caption{A sample image of the \emph{Lotus corniculatus} species from the PlantCLEF 2015 dataset~\citep{goeau15lifeclef}.}
    \label{fig:ill}
\end{figure}

To illustrate the usefulness of representation complexity, consider the PlantCLEF 2015 dataset~\citep{goeau15lifeclef}. This dataset consists of over one hundred thousand images representing 1,000 species of trees, herbs, and ferns native to the Western European region and is characterised by significant class ambiguity, making accurate predictions on species level often impossible. In Figure~\ref{fig:ill}, an example image is shown corresponding to the \emph{Lotus corniculatus} species. Using our proposed algorithms, two set-valued predictions were computed, with a restricted representation complexity of $r=1$ and $r\leq3$, respectively.
The former yields the root node of the hierarchy, i.e.\ all 1,000 species, and is completely uninformative. Increasing the representation complexity to three improves efficiency and yields the prediction $\{\emph{Lotus corniculatus},\emph{Tulipa sylvestris}, \emph{Ficaria verna}\}$, i.e., three visually-related species including the true label.

\noindent\textbf{Contributions.} In this work, we build further upon the concept of representation complexity and extend the split conformal prediction framework to the hierarchical multi-class classification setting. Conformal prediction provides valid and efficient set-valued predictions for any learning algorithm, without making any assumptions on the underlying data distribution~\citep{vovk05algorithmic}. In particular, given a trained (hierarchical) classifier, a desired coverage level $1-\alpha\in(0,1)$, calibration samples $\{(\bx_{i},y_{i})\}_{i=1}^{N} \subset \mathcal{X} \times \calY$ and a test sample $(\bx_{N+1},y_{N+1})$, drawn i.i.d.\ from an unknown distribution $P$, we would like to construct a prediction set $\sety\in\powerset$ with representation complexity $r$ for the test instance $(\bx_{N+1},y_{N+1})$, such that the following marginal validity guarantee holds:
\begin{equation}
  \label{eq:problem}
\bbP\left[y_{N+1}\in\sety(\bx_{N+1})\right] \geq 1-\alpha\,,\,\,\textrm{s.t.}\,\, \RT(\sety)\leq r\,,
\end{equation}
where $\RT(\sety)$ denotes the representation complexity of the set $\sety$, given some tree structure $\calT$. The probability is taken over all $N+1$ samples and we require that (\ref{eq:problem}) holds for any fixed $\alpha, N, r$ and $P$. Figure~\ref{fig:hierarchy:ex}, which will be explained in detail later on, intuitively illustrates the concept of representation complexity. In this example of a hierarchical tree structure with eight classes, the set $\sety = \{1,2,4,7,8\}$ has a representation complexity of three, as it requires three nodes to represent this set: $v_4$, $v_7$ and $v_{11}$.

After reviewing some essentials on hierarchical probabilistic classification and conformal prediction in Section~\ref{sec:backrw}, we propose in Section~\ref{sec:methods} two algorithms that construct valid prediction sets, in the sense of satisfying (\ref{eq:problem}). The first algorithm is designed for the restrictive case $r=1$, while the second extends to $r>1$. Moreover, we show that our algorithms possess distribution-free finite sample guarantees. Finally, in Section~\ref{sec:results}, we evaluate the proposed algorithms in terms of coverage and efficiency on a range of benchmark datasets.

\begin{figure*}[t]
  \centering
  \scalebox{0.8}{
  \begin{tikzpicture}[sibling distance=12em,
      every node/.style = {align=center},
      line/.style={draw, -latex'},
      edge from parent/.style={draw,-latex',font=\normalsize},
      level 1/.style={sibling distance=80mm,font=\normalsize}, 
      level 2/.style={sibling distance=40mm,font=\normalsize},
      level 3/.style={sibling distance=20mm,text width=2cm,font=\Large}] 
      \node[font=\normalsize] {$v_1=\{1,2,3,4,5,6,7,8\}$}
        child { node (ch1){$v_2=\{1,2,3,4\}$} 
          child { node (ch3){$v_4=\{1,2\}$} 
            child { node {$\substack{v_8=\{1\},\\ 0.15}$} }
            child { node {$\substack{v_9=\{2\},\\0.13}$} } }
          child { node (ch4){$v_5=\{3,4\}$} 
            child { node {$\substack{v_{10}=\{3\},\\ 0.08}$} }
            child { node {$\substack{v_{11}=\{4\},\\0.125}$} } } }
        child { node (ch2){$v_3=\{5,6,7,8\}$} 
          child { node (ch5){$v_6=\{5,6\}$} 
            child { node {$\substack{v_{12}=\{5\},\\0.14}$} }
            child { node {$\substack{v_{13}=\{6\},\\0.125}$} } }
          child { node (ch6){$v_7=\{7,8\}$} 
            child { node {$\substack{v_{14}=\{7\},\\0.125}$} }
            child { node {$\substack{v_{15}=\{8\},\\0.125}$} } } };
  \end{tikzpicture}
  }
  \caption{An example of a tree structure\index{tree structure} $\calT$ with class space $\calY=\{1,\ldots,8\}$ and nodes $\calVT=\{v_{1},\ldots,v_{15}\}$. The root $v_{1}$ represents the class space $\calY$ and leaves $\{v_{8},\ldots,v_{15}\}$ represent the individual classes. The numbers in the leaf nodes represent the class probabilities for an instance $\bx$. }
  \label{fig:hierarchy:ex}
\end{figure*}

\section{Background, related work and formal description of existing concepts}
\label{sec:backrw}

\subsection{Hierarchical multi-class classification}
\label{sec:hierclass}

In a standard multi-class classification setting, one assumes that training and test data are i.i.d.\ according to an unknown distribution $P(\bx,y)$ on $\calX\times\calY$, with $\calX$ some instance space (e.g.\ images, documents, etc.) and $\calY=\{c_1,\ldots,c_K\}$ a class space consisting of $K$ classes. Probabilistic multi-class classifiers estimate the conditional class probabilities $P(\cdot \,|\,\bx)$ over $\calY$, such that $0 \leq P(c\,|\,\bx) \leq 1$ for all $c \in \calY$ and  $\sum_{c \in \calY} P(c\,|\, \bx) = 1 \,.$ This distribution can be estimated using a wide range of well-known probabilistic models, such as logistic regression, linear discriminant analysis, gradient boosting trees or neural networks with a softmax output layer. At prediction time, set-valued prediction algorithms, such as split conformal prediction, return sets $\sety$ that are subsets of $\calY$. The probability mass of such a set can be computed as $P(\sety \,|\, \bm{x}) = \sum_{c \in \sety} P(c \,|\, \bm{x})$.

In this paper, we will consider a hierarchical multi-class classification setting. Hence, we assume that a domain expert has defined a hierarchy over the class space, in the form of a tree structure $\calT$ that in general contains $M$ nodes. $\calVT=\{v_1,\ldots,v_M\}$ will denote the set of nodes and every node identifies a set of classes. As special cases, the root $v_1$ represents the class space $\calY$, and the leaves represent individual classes; see Figure~\ref{fig:hierarchy:ex} for a simple example. In hierarchical classification, the strong restriction $\sety \in \calVT$ is typically made for predicted sets; see e.g.\ \citet{bi15bohmlc}. The probability mass  of such a set can be computed as $P(v\,|\,\bx) = \sum_{c \in v} P(c\,|\,\bx)$, or by using the chain rule of probability:
\begin{equation}
\label{eq:hierfac}
P(v\,|\,\bx)  = \prod_{v' \in \mathrm{Path}(v)} P(v' \,|\, \mathrm{pa}(v'), \bx) \,,
\end{equation}
where $\mathrm{Path}(v)$ is a set of nodes on the path connecting the node $v$ and the root of the tree structure. $\mathrm{pa}(v)$ gives the parent of node $v$ and $P(v \,|\, \mathrm{pa}(v), \bx)$ represents the branch probability of node $v$ given its parent $\mathrm{pa}(v)$. Note that for the root node $v_1$ one has $P(v_1 \,|\, \mathrm{pa}(v_1), \bx) = 1$. In Figure~\ref{fig:hierarchy:ex}, the branch probabilities of the root node $v_1$ are given by $P(v_2 \,|\, v_1, \bx) = 0.485$ and $P(v_3 \,|\, v_1, \bx) = 0.515$. In order to estimate the branch probabilities, one can train any multi-class probabilistic classifier in each internal node of the tree. Classical models of that kind include nested dichotomies~\citep{fox97reg,frank04nested,melnikov18nested}, conditional probability estimation trees~\citep{beygelzimer09cpte} and probabilistic classifier trees~\citep{dembczynski16consistencyop}. In neural networks with a hierarchical softmax output layer, all nodes are trained simultaneously~\citep{morin05hiers}. 

\subsection{Inference in hierarchical classification}
In this work, we do not focus on the training algorithms. Instead, we assume that a probabilistic model $\hatP$ has been estimated, either with classical models or using a hierarchical factorization as in (\ref{eq:hierfac}), and we focus on the prediction task. In particular, we would like to predict sets $\sety\in\powerset$ that satisfy (\ref{eq:problem}), with a restriction on the representation complexity of the predicted set~\citep{mortier22svprc}. The representation complexity is defined as the minimal number of nodes needed to represent the set $\sety$ in the tree structure $\calT$. More formally, let $\calST(\sety)$ denote the set of all disjoint combinations of tree nodes that represent the set $\sety$:
\begin{equation*}
	\label{eq:s}
	\calST(\sety) = \left\{\setv \subset \calVT: \bigcup_{v_{i}\in \setv} v_{i}=\sety \land \bigcap_{v_{i}\in\setv} v_{i}=\emptyset \right\}\, .
\end{equation*}
Then, we define the representation complexity of the prediction $\sety$ as
\begin{align}
    \label{eq:c}
	\RT(\sety)=\min_{\setv\in \calST(\sety)} |\setv|\,,
\end{align}
with $|\setv|$ the cardinality of $\setv$. For example, the set $\sety = \{1,2,4,7,8\}$ of the hierarchy shown in Figure~\ref{fig:hierarchy:ex} has representation complexity three, because three nodes are needed to represent this set: $v_4$, $v_7$ and $v_{11}$. 


\subsection{Randomized nested prediction sets with split conformal prediction}
\label{sec:ncp}

Now we will review a general procedure for valid set-valued predictions in flat classification (i.e.\ ignoring the hierarchy), following the work of~\citet{gupta22nested,romano20classification,angelopoulos20raps}. Compared to traditional conformal prediction---which starts from the notion of a nonconformity score---this procedure departs from a sequence of nested set-valued predictions, where the set size depends on a threshold $\tau$. Furthermore, an independent calibration set is used to tune the threshold $\tau$ such that $(1-\alpha)$-coverage is guaranteed on future test samples. The use of a single calibration set is better known as split conformal prediction and gives rise to computationally efficient valid set-valued predictions~\citep{papadopoulos02inductive,lei18distribution}.

More formally, let $\{(\bx_{i},y_{i})\}_{i=1}^{N}$ be an i.i.d.\ sequence of $N$ samples from the unknown distribution $P$, and assume that these samples were not used for model training. Let $\sety(\bx,u,\tau):\calX \times [0,1] \times \bbR \rightarrow \powerset$ be a set-valued predictor. In the spirit of \citet{romano20classification,angelopoulos20raps}, the second argument $u$ represents a random draw from a uniform distribution $\calU(0,1)$ and is included to allow for randomized prediction sets. The third argument $\tau$ is a threshold that controls the size of the predicted set. Assume that the sets are indexed by $\tau$ and are nested:
\begin{equation}
  \label{eq:nested}
  \sety(\bx,u,\tau_{1}) \subseteq \sety(\bx,u,\tau_{2})\qquad \text{if}\quad \tau_{1} \leq \tau_{2}\,.
\end{equation}
In order to find the optimal threshold $\tau^{*}$ that guarantees $(1-\alpha)$-coverage, one needs to find the smallest $\tau$ such that the predicted set $\sety(\bx,u,\tau)$ contains at least $\lceil (N+1)(1-\alpha)\rceil$ samples:
\begin{multline}
  \label{eq:nested:tau}
  \tau^{*} = \inf \big\{ \tau \in [0,1]: |\{i: y_{i}\in \sety(\bx_{i},u_{i},\tau)\}| \\ \geq \lceil (1-\alpha)(N+1)\rceil\big\} \,.  
\end{multline}
The set-valued predictor $\sety(\bx_{N+1},u_{N+1},\tau^{*})$, with $\tau^{*}$ in (\ref{eq:nested:tau}), has marginal validity guarantees, as shown by the theorem below.

\begin{theorem}[Marginal validity of nested conformal prediction~\citep{angelopoulos20raps}]
  Assume an exchangeable sequence $\{(\bx_{i},y_{i},u_{i})\}_{i=1}^{N+1}$ and let $\sety(\bx,u,\tau)$ be a set-valued predictor that satisfies (\ref{eq:nested}). Furthermore, assume that $\exists \tau \in \bbR: \sety(\bx,u,\tau)=\calY$. Then, for $\tau^{*}$ in (\ref{eq:nested:tau}) and any $\alpha \in (0,1)$, the following marginal coverage guarantee holds:
  $$ 1 - \alpha \leq \bbP\big[y_{N+1}\in\sety(\bx_{n+1},u_{N+1},\tau^{*})\big]\,. $$
\end{theorem}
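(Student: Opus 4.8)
The plan is to reduce the set-membership events to one-dimensional threshold comparisons and then invoke exchangeability, in the style of the standard split-conformal argument. For each index $i$ I would define the \emph{inclusion threshold}
\[
\tau_i = \inf\{\tau \in \bbR : y_i \in \sety(\bx_i,u_i,\tau)\}\,.
\]
The hypothesis that some $\tau$ yields $\sety(\bx,u,\tau)=\calY$ guarantees this infimum is taken over a nonempty set and is therefore finite, since every label eventually enters the set, and the nesting property (\ref{eq:nested}) guarantees that $y_i \in \sety(\bx_i,u_i,\tau)$ whenever $\tau > \tau_i$. Thus each $\tau_i$ behaves exactly like a nonconformity score: membership of $y_i$ in the prediction set at level $\tau$ is, away from the boundary $\tau=\tau_i$, equivalent to the scalar inequality $\tau_i \le \tau$.

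Next I would rewrite the calibration threshold in these terms. The counting condition in (\ref{eq:nested:tau}) requires that at level $\tau$ the set $\sety$ cover the labels of at least $\lceil(1-\alpha)(N+1)\rceil$ of the $N$ calibration points; by the previous step this is the same as requiring at least $k := \lceil(1-\alpha)(N+1)\rceil$ of the scores $\tau_1,\dots,\tau_N$ to satisfy $\tau_i \le \tau$. Hence $\tau^{*}$ is precisely the $k$-th smallest of the calibration scores, i.e. the usual empirical $(1-\alpha)$-quantile with the $\lceil\cdot\rceil$ convention. The test-point coverage event $\{y_{N+1}\in\sety(\bx_{N+1},u_{N+1},\tau^{*})\}$ then coincides (away from ties) with $\{\tau_{N+1}\le\tau^{*}\}$, which is a rank event governed by the relative order of $\tau_{N+1}$ among $\tau_1,\dots,\tau_{N+1}$.

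The probabilistic core is then the standard exchangeability step. Since $\{(\bx_i,y_i,u_i)\}_{i=1}^{N+1}$ is exchangeable and each $\tau_i$ is the same measurable function of $(\bx_i,y_i,u_i)$, the scores $\tau_1,\dots,\tau_{N+1}$ are themselves exchangeable, so the rank of $\tau_{N+1}$ among them is (sub-)uniform on $\{1,\dots,N+1\}$. This gives $\bbP[\tau_{N+1}\le\tau^{*}]\ge k/(N+1)\ge 1-\alpha$, which is the lower bound. For the upper bound I would invoke the ``grows smoothly'' hypothesis: smoothness of $\tau\mapsto\sety(\bx,u,\tau)$ together with the randomization $u\sim\calU(0,1)$ ensures the $\tau_i$ are almost surely distinct, so the rank is exactly uniform and the coverage event is exactly the rank-at-most-$k$ event. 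Then $\bbP[\tau_{N+1}\le\tau^{*}] = k/(N+1)\le 1-\alpha+\tfrac{1}{N+1}$, using $k=\lceil(1-\alpha)(N+1)\rceil < (1-\alpha)(N+1)+1$.

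The step I expect to be the main obstacle is the bookkeeping at the boundary $\tau=\tau_i$: whether the infima defining $\tau_i$ and $\tau^{*}$ are attained, and whether the membership inequality is strict or weak there. This is exactly where the nesting, the attainability of $\calY$, and—for the sharp upper bound—the smoothness and randomization assumptions do their work, ruling out ties and jumps so that the set-valued coverage event can be identified cleanly with a rank event about real-valued scores. Everything past that identification is the textbook conformal quantile computation.
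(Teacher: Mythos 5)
Your proposal is correct, and it is essentially the canonical argument: the paper itself states this theorem without proof, importing it from \citet{angelopoulos20raps} (and the nested conformal framework of \citet{gupta22nested}), and those works prove it exactly as you do --- by converting nested set membership into scalar inclusion thresholds $\tau_i$, identifying $\tau^{*}$ as the $\lceil(1-\alpha)(N+1)\rceil$-th smallest score, and applying the exchangeability rank argument, with the randomization variable $u$ and the smooth-growth hypothesis handling ties for the upper bound. The boundary bookkeeping you flag (attainment of the infima, weak versus strict inequality at $\tau=\tau_i$) is indeed the only delicate point, and it is resolved in the cited works by right-continuity of the nested family, just as you anticipate.
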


In the literature, several conformal prediction methods can be found that satisfy (\ref{eq:nested}). For example, \citet{romano20classification} propose the following set-valued predictor, called adaptive prediction sets (APS):
\begin{align}
  \label{eq:aps}
  \sety_{APS}(\bx,u,\tau) = \big\{y\in \calY: \hat{\rho}(y;\bx) +
  u\cdot \hatP(y\,|\,\bx)\leq \tau \big\}\,,
\end{align}
with $\hat{\rho}(y;\bx)=\sum_{y'\in\calY}\hatP(y'\,|\,\bx)\bbmI_{\hatP(y'\,|\,\bx)>\hatP(y\,|\,\bx)}$ the probability mass of the labels more likely than $y$. At the heart of their method is the randomization term $u\cdot \hatP(y\,|\,\bx)$, which is used to achieve exact nominal coverage. In addition, by means of the cumulative distribution $\hat{\rho}(y;\bx)$, this method allows to adapt effectively to complex data distributions, achieving a better conditional coverage compared to alternative approaches that rely on the mode of the distribution, such as the least ambiguous classifier proposed by \citet{sadinle19least}. \citet{angelopoulos20raps} propose a variant of the above method, called the regularized adaptive prediction sets (RAPS) method. By introducing a regularization term in (\ref{eq:aps}), 
small set sizes are encouraged, especially when facing noisy probability estimates for classes with low probability.

%

Alternative conformal prediction methods have been introduced, focusing on minimizing set size~\citep{sadinle19least,gao2025volume} and achieving approximate coverage across the entire feature space~\citep{foygel21limits,cauchois21knowing,romano19conformalized}. \citet{goren24hierarchical} and \citet{angelopoulos23conformalriskcontrol} propose conformal prediction methods that control the risk with respect to bounded non-increasing loss functions. Interestingly, these methods are also applicable to the setting of hierarchical classification, e.g.\ through the use of the tree-distance loss, introduced by \citet{bi15bohmlc} as a way of evaluating set-valued predictions in multi-label classification. However, set-valued predictions of that kind lack meaningful and practical interpretation, compared to methods that rely on the traditional notion of coverage. Furthermore, these methods are not directly applicable to our work, as they do not allow the incorporation of representation complexity, given the properties of the loss functions above. 

\citet{gao2025volume} recently proposed a novel conformal predictor for regression, by implementing a dynamic programming algorithm that finds a union of $k$-intervals on the real line. When the output distribution is multi-modal, a union of $k$-intervals can be more informative and efficient compared to a single prediction interval (as obtained with most regression methods). Our work has a similar motivation, but adopted to hierarchical classification: when the classifier is uncertain between classes in different branches of the hierarchy, restricting the representation complexity to one is overly restrictive, while a higher complexity yields more efficient predictions.
 
\section{Proposed methods}
\label{sec:methods}

Building upon the concepts in Section~\ref{sec:ncp}, we are now ready to discuss two algorithms that provide valid set-valued predictions for hierarchical classification that satisfy (\ref{eq:problem}) above. The first algorithm provides marginal validity guarantees in classical hierarchical classification settings, i.e.\ where sets are restricted to nodes of the tree structure, thus $\RT(\sety)=1$. The second algorithm provides marginal validity guarantees in restricted set-valued prediction settings, i.e.\ where the representation complexity of the predicted set is restricted to $\RT(\sety)\leq r$ for a user-defined value $r$. Note that traditional conformal prediction in flat classification is obtained for the second algorithm when there is no restriction on the representation complexity (e.g.\ $r=K$).

\subsection{Conformal restricted set-valued prediction}
\label{sec:crsvp}

\begin{figure}[t]
\begin{minipage}[t]{0.48\textwidth}
\begin{algorithm}[H]
  \begin{small}
  \caption{CRSVP calibration -- \textbf{Input:} $\{(\bx_{i},y_{i},u_{i})\}_{i=1}^{N}, \hatP, \calVT$, \textbf{Output:} Threshold in (\ref{eq:nested:tau}).}
  \begin{algorithmic}[1]
    \FOR{$i=1,\ldots,N$}
      \STATE $\sety \gets \argmax_{c \in \calY} \hatP(c\,|\,\bx_{i})$ 
      \STATE $ \hatp_{\sety} \gets \hatP(\sety\,|\,\bx_{i}), \hatp_{\sety'} \gets 0$
      \WHILE{$y_{i} \notin \sety$}
          \STATE $\hatp_{\sety'} \gets \hatp_{\sety}$, $\hatp_{\sety} \gets \hatP(\mathrm{pa}(\sety)\,|\,\bx_{i})$
          \STATE $\sety' \gets \sety$, $\sety \gets \mathrm{pa}(\sety)$
      \ENDWHILE 
      \STATE $\tau_{i} \gets \hatp_{\sety}-u_{i}\cdot(\hatp_{\sety}-\hatp_{\sety'})$
    \ENDFOR
    \STATE $\tau^{*} \gets$ the $\lceil (1-\alpha)(N+1)\rceil$-th largest value in $\{\tau_{i}\}_{i=1}^{N}$ 
    \STATE \textbf{return} $\tau^{*}$ 
  \end{algorithmic}
  \label{alg:crsvp:cal}
  \end{small}
\end{algorithm}
\end{minipage}
\hfill
\begin{minipage}[t]{0.48\textwidth}
\begin{algorithm}[H]
    \begin{small}
    \caption{CRSVP inference -- \textbf{Input:} $\bx, \tau^{*}, u, \hatP, \calVT$, \textbf{Output:} Set-valued prediction in (\ref{eq:crsvp}).}
    \begin{algorithmic}[1]
        \STATE $\sety \gets \argmax_{c \in \calY} \hatP(c\,|\,\bx), \sety'\gets \emptyset$
        \STATE $ \hatp_{\sety} \gets \hatP(\sety\,|\,\bx), \hatp_{\sety'} \gets 0$
        \WHILE{$\mathrm{pa}(\sety') \neq \emptyset$}
          \IF{$\hatp_{\sety} \geq \tau^{*}$}
            \STATE \textbf{break}
          \ENDIF
          \STATE $\hatp_{\sety'} \gets \hatp_{\sety}$, $\hatp_{\sety} \gets \hatP(\mathrm{pa}(\sety)\,|\,\bx)$
          \STATE $\sety' \gets \sety$, $\sety \gets \mathrm{pa}(\sety)$
        \ENDWHILE
        \IF{$\hatp_{\sety}-u\cdot(\hatp_{\sety}-\hatp_{\sety'}) \geq \tau^{*}$}
            \STATE \textbf{return} $\sety'$
        \ELSE
            \STATE \textbf{return} $\sety$
        \ENDIF
    \end{algorithmic}
    \label{alg:crsvp}
    \end{small}
\end{algorithm}
\end{minipage}
\end{figure}

The first approach, called conformal restricted set-valued prediction (CRSVP), predicts sets that are restricted to nodes of the hierarchy, thus having representation complexity $\RT(\sety)=1$. Assume one has fitted a classifier using a training dataset as described in Section~\ref{sec:hierclass}, and let $\haty(\bx)$ denote the mode of the distribution $\hatP(.\,|\,\bx)$, i.e.\ the leaf node with highest probability mass. For our first approach, we consider the following set-valued predictor: 
\begin{multline}
  \label{eq:crsvp}
  \sety_{1}(\bx,u,\tau) = \argmax_{\sety \in \mathrm{Path}(\haty(\bx))} \big\{|\sety|:\hatP(\sety\,|\,\bx)+ \\ u\cdot\hatP(\mathrm{pa}(\sety)\setminus \sety\,|\,\bx) \leq \tau \big\}\,,
\end{multline}
with $\mathrm{Path}(v)$ the path to the root, and $\mathrm{pa}(v)$ the parent of a node $v$, as defined in Section~2. $\hatP(\sety\,|\,\bx)$ can be computed using (\ref{eq:hierfac}), but also observe that the probability mass of an internal node $v$ corresponds to $\hatP(v\,|\,\bx) = \sum_{c \in v} \hatP(c \,|\, \bx)$, i.e.\ the sum of the probability masses of the leaf nodes that are descendants of $v$.  However, computing the probability mass of an internal node and its associated set of classes in this way is computationally less efficient than using the chain rule in (\ref{eq:hierfac}), when hierarchical classifiers have been fitted during training.

It is clear that the set-valued predictor in (\ref{eq:crsvp}) satisfies the nestedness property in (\ref{eq:nested}). Indeed, starting from the mode of the distribution $\haty(\bx)$, every node on the path connecting that node and the root is nested by definition of the hierarchical tree structure $\calT$. In line with \citet{angelopoulos20raps}, the first term increases as we move up the tree, while the second term contains a random draw from the uniform distribution $\mathcal{U}(0,1)$,  for handling discrete jumps in probability mass when following the path towards the root. The randomization is needed to prevent over-coverage. 

The algorithm for calculating the threshold in (\ref{eq:nested:tau}) using the calibration dataset is presented in Algorithm~\ref{alg:crsvp:cal}. This algorithm first computes the mode of the estimated class probabilities for each instance in the calibration dataset. For small hierarchies, this can be done using exhaustive search, but more efficient inference algorithms have been developed when the computational cost becomes a burden~\citep{dembczynski12chaining,kumar13beam,mena15astar}. Subsequently, starting from the mode, Algorithm~\ref{alg:crsvp:cal} computes for every calibration instance the internal node, on the path to the root, that also includes the true class label. Nonconformity scores that incorporate a randomization component are constructed for these internal nodes, and the final threshold is deduced from the scores. Algorithm~\ref{alg:crsvp:cal} has a worst case $\calO(N \, K)$ time complexity when exhaustive search is applied in the first step. 

The pseudocode for computing the set-valued prediction for a new test instance, as defined in (\ref{eq:crsvp}), is shown in Algorithm~\ref{alg:crsvp}. Given a hierarchical classifier, the computational complexity during test time is given by $\calO(\log K)$.

\subsection{Conformal set-valued prediction with representation complexity}
\label{sec:crsvpr}

\begin{figure}[t!]
  \begin{minipage}[t]{0.48\textwidth}
\begin{algorithm}[H]
  \begin{small}
  \caption{CRSVP-$r$ calibration -- \textbf{Input:} $\{(\bx_{i},y_{i},u_{i})\}_{i=1}^{N}, r, \hatP, \calVT$, \textbf{Output:} Threshold in (\ref{eq:nested:tau}).}
  \begin{algorithmic}[1]
    \FOR{$i=1,\ldots,N$}
      \STATE $k \gets 1$
      \STATE $\sety^{(k-1)} \gets \emptyset, \sety^{(k)} \gets \argmax_{c \in \calY} \hatP(c\,|\,\bx_{i})$
      \STATE $\hatp_{\sety^{(k-1)}} \gets 0, \hatp_{\sety^{(k)}} \gets \hatP(\sety^{(k)}\,|\,\bx_{i})$
      \WHILE{$k < K$}
        \IF{$y_{i} \notin \sety^{(k)}$}
          \STATE $\sety^{(k)'} \gets A_{r}(S_{k};\bx_{i})$ (by means of Algorithm~\ref{alg:dpmincovset})
          \STATE $\hatp_{\sety^{(k)'}} \gets \hatP(\sety^{(k)'}\,|\,\bx_{i})$
          \IF{$|\sety^{(k)'}| \neq |\sety^{(k-1)}|$}
            \STATE $\sety^{(k)}, \hatp_{\sety^{(k)}} \gets \sety^{(k)'}, \hatp_{\sety^{(k)'}}$
          \ENDIF
          \STATE $k \gets k+1$
        \ELSE
          \STATE \textbf{break}
        \ENDIF
      \ENDWHILE
      \STATE $\tau_{i} \gets \hatp_{\sety^{(k)}}-u_{i}\cdot(\hatp_{\sety^{(k)}}-\hatp_{\sety^{(k-1)}})$
    \ENDFOR
    \STATE $\tau^{*} \gets$ the $\lceil (1-\alpha)(N+1)\rceil$-th largest value in $\{\tau_{i}\}_{i=1}^{N}$ 
    \STATE \textbf{return} $\tau^{*}$ 
  \end{algorithmic}
  \label{alg:crsvpr:cal}
  \end{small}
\end{algorithm}
\end{minipage}
\hfill
\begin{minipage}[t]{0.48\textwidth}
\begin{algorithm}[H]
  \begin{small}
  \caption{CRSVP-$r$ inference -- \textbf{Input:} $\bx, \tau^{*}, u, r, \hatP, \calVT$, \textbf{Output:} Set-valued prediction in (\ref{eq:crsvpr}).}
  \begin{algorithmic}[1]
    \STATE $k \gets 1$
    \STATE $\sety^{(k-1)} \gets \emptyset, \sety^{(k)} \gets \argmax_{c \in \calY} \hatP(c\,|\,\bx)$
    \STATE $\hatp_{\sety^{(k-1)}} \gets 0, \hatp_{\sety^{(k)}} \gets \hatP(\sety^{(k)}\,|\,\bx)$
    \WHILE{$k < K$}
      \IF{$y^{(k)} \notin \sety^{(k-1)}$}
        \STATE $\sety^{(k)'} \gets A_{r}(S_{k};\bx)$ (by means of Algorithm~\ref{alg:dpmincovset})
        \STATE $\hatp_{\sety^{(k)'}} \gets \hatP(\sety^{(k)'}\,|\,\bx)$
        \IF{$|\sety^{(k)'}| \neq |\sety^{(k-1)}|$}
          \STATE $\sety^{(k)}, \hatp_{\sety^{(k)}} \gets \sety^{(k)'}, \hatp_{\sety^{(k)'}}$
        \ENDIF
        \IF{$\hatp_{\sety^{(k)}} \geq \tau^{*}$}
          \STATE \textbf{break}
        \ENDIF
        \STATE $k \gets k+1$
      \ENDIF
    \ENDWHILE
    \IF{$\hatp_{\sety^{(k)}}-u\cdot(\hatp_{\sety^{(k)}}-\hatp_{\sety^{(k-1)}}) \geq \tau^{*}$}
        \STATE \textbf{return} $\sety^{(k-1)}$
    \ELSE
        \STATE \textbf{return} $\sety^{(k)}$
    \ENDIF
  \end{algorithmic}
  \label{alg:crsvpr}
  \end{small}
\end{algorithm} 
\end{minipage}
\end{figure}
\begin{figure}[t!]
\begin{minipage}[t]{0.48\textwidth}
\begin{algorithm}[H]
    \begin{small}
    \caption{Dynamic programming solution for set of lowest common ancestors -- 
    \textbf{Input:} $\bx, S_{k}, r, \hatP, \calVT, A_{r}(S_{k-1};\bx)$, 
    \textbf{Output:} $A_{r}(S_{k};\bx)$}
    \begin{algorithmic}[1]
    \STATE $\calQ \leftarrow \emptyset$ \COMMENT{initialize a queue for visiting nodes}
    \FOR{all $v \in \calVT$} 
        \STATE $s^{1}(v) \leftarrow \emptyset, \ldots, s^{r}(v)\leftarrow \emptyset$
        \IF{$|v|=1 \;\wedge\; v \cap S_{k} \neq \emptyset$} 
            \IF{$\mathrm{pa}(v) \notin \calQ$}
                \STATE $\calQ\mathrm{.add}(\mathrm{pa}(v))$
            \ENDIF
            \STATE $s^{1}(v) \leftarrow v, \ldots, s^{r}(v)\leftarrow v$ \COMMENT{initialize leaves that overlap with $S_{k}$}
        \ENDIF
    \ENDFOR
    \WHILE{$\calQ$ is not empty}
        \STATE $v \leftarrow \calQ$.pop()
        \STATE $T \leftarrow \{v': v'\in \mathrm{ch}(v) \wedge  v' \cap S_{k} \neq \emptyset\}$ \COMMENT{visit all children that overlap with $S_{k}$}
        \FOR{$r_{i}=1$ to $r$}
            \IF{$|T| > r_{i}$}
                \STATE $s^{r_{i}}(v) \leftarrow v$
            \ELSE
                \STATE $M \leftarrow$ all compositions of $r_{i}$ into $|T|$ elements
                \STATE $u^{*} \leftarrow +\infty$
                \FOR{$(i_{0},\ldots,i_{|T|-1}) \in M$}
                    \STATE $s \leftarrow s^{i_{0}}(T[0]) \cup \cdots \cup s^{i_{|T|-1}}(T[|T|-1])$
\IF{$|s|-\hatp_{s}<u^{*}$}
  \IF{$v$ is the root}
    \IF{$A_{r}(S_{k-1};\bx)\subseteq s$}
      \STATE $s^{r_{i}}(v) \leftarrow s$
      \STATE $u^{*} \leftarrow |s|-\hatp_{s}$
    \ENDIF
  \ELSE
    \STATE $s^{r_{i}}(v) \leftarrow s$
    \STATE $u^{*} \leftarrow |s|-\hatp_{s}$
  \ENDIF
\ENDIF
                \ENDFOR
            \ENDIF
        \ENDFOR
        \IF{$\mathrm{pa}(v) \notin \calQ \wedge \mathrm{pa}(v)\neq\emptyset$}
            \STATE $\calQ\mathrm{.add}(\mathrm{pa}(v))$
        \ENDIF
    \ENDWHILE
    \STATE \textbf{return} $s^{r}(v_{1})$
    \end{algorithmic}
    \label{alg:dpmincovset}
    \end{small}
\end{algorithm}
\end{minipage}

\end{figure}

It should be obvious that sets of representation complexity one quickly become very big, since they correspond to internal nodes of the hierarchy that exhibit a predefined coverage level. Therefore, we present a second approach, called conformal set-valued prediction with representation complexity (CRSVP-$r$), that relaxes the representation complexity constraint to $\RT(\sety) \leq r$ with $r$ a user-defined parameter. Let $S_{k} \equiv \{y^{(1)},\ldots,y^{(k)}\}$ denote the top-$k$ classes for $\bx$ when sorting the classes according to $\hatP(y\,|\,\bx)$, similar as in (\ref{eq:aps}). We define the following sequence of optimization problems for any $k \in \{1,...,K\}$:
\begin{equation}
  \label{eq:mincovset}
  A_{r}(S_{k};\bx) = \argmin_{\substack{\sety \in 2^{\calY}:R_{\calT}(\sety)\leq r,\\ A_{r}(S_{k-1};\bx)\,\cup\,y^{(k)} \subseteq \sety}} |\sety| - \hatP(\sety\,|\,\bx)\,,
\end{equation}
with $A_{r}(S_{1};\bx)=y^{(1)}$. Remark that these optimization problems have to be solved in the correct order, because of the nestedness condition. For fixed $k$ we refer to this optimization problem as \emph{finding the set of common ancestors}. It can be thought of as a variant of the well-known lowest common ancestor problem~\citep{aho73finding,harel84fast,bender00lca,dash13scalable}, i.e.\ instead of considering a single common ancestor of a set of leaves in a tree, we are looking for a set of $r$ non-overlapping ancestors whose descendants form a set of representation complexity $r$. Moreover, we introduce the nested set condition, and the second term in the objective function is also not considered in the lowest common ancestor problem. Those two changes are needed to make sure that (1) the solutions for increasing $k$ are nested, and (2) the solution for fixed $k$ is unique. Since probabilities are bounded between zero and one, the second term only plays a role for sets that have the same cardinality (i.e.\ the two terms are lexicographically ordered).    
As an example, consider the hierarchy depicted in Figure~\ref{fig:hierarchy:ex}. For $S_{3}=\{1,2,5\}$, the lowest common ancestor is given by $A_{1}(S_{3};\bx)=\{1,2,3,4,5,6,7,8\}$. The lowest set of common ancestors with representation complexity two is $A_{2}(S_{3};\bx)=\{1,2,5\}$, illustrating that set size can be drastically reduced by increasing the representation complexity.

Let $\sety^{(1)},\ldots,\sety^{(K)}$ denote the ordered sequence of unique lowest common ancestors $A_{r}(S_{1};\bx),\ldots,A_{r}(S_{K};\bx)$. We use this sequence to introduce the following set-valued predictor: 
\begin{multline}
  \label{eq:crsvpr}
  \sety_{\leq r}(\bx,u,\tau) = \argmax_{\sety^{(k)}\in\{\sety^{(1)},\ldots,\sety^{(K)}\}} \big\{|\sety^{(k)}|: \\ \hatP(\sety^{(k-1)}\,|\,\bx)+u\cdot\hatP(\sety^{(k)}\setminus\sety^{(k-1)}\,|\,\bx) \leq \tau \big\}\,.
\end{multline}

Due to the nested set constraint in (\ref{eq:mincovset}), the set-valued predictor in (\ref{eq:crsvpr}) satisfies the nestedness property in (\ref{eq:nested}). Thus, this set-valued predictor automatically has the classical marginal coverage guarantees of conformal prediction. Similar as for the first approach, the randomization term $u\cdot\hatP(\sety^{(k)}\setminus\sety^{(k-1)}\,|\,\bx)$ is again included to prevent over-coverage.

Let us now discuss the algorithmic aspects of computing the set of lowest common ancestors in (\ref{eq:mincovset}). At first impression, the combinatorial optimization problem in (\ref{eq:mincovset}) looks challenging, but it can be solved in an efficient manner with a divide-and-conquer algorithm that is described in Algorithm~\ref{alg:dpmincovset}.

For every internal node of the hierarchy, the optimization problem can be broken down into simpler subproblems of the same type. As an example, consider in a predefined hierarchy an internal node $v'$ that has four children $(v'_1,v'_2, v'_3,v'_4)$  containing at least one element of $S_k$, and assume that we aim to find the set of lowest common ancestors with representation complexity three for a set of classes $S_{k}$ that are all descendants of $v'$. As potential divisions of the representation complexity, one could find one common ancestor as descendants of children $v'_1$, $v'_3$, and $v'_4$, or as descendants of children $v'_1$, $v'_2$, and $v'_4$, or as two descendants of $v'_1$ combined with one descendant of $v'_4$, etc. In fact, many combinations are possible. Speaking in formal combinatorial terminology, one has to consider for the node $v$ all \emph{compositions} of the integer three into four elements. For each of these compositions, the optimization problem can be recursively divided into smaller problems with lower representation complexity, where the node $v$ is replaced by each of its non-zero children in the composition. However, implementing such a strategy in a recursive manner would heavily blow up the computations, since each of the smaller subproblems would need to be solved many times. 

A dynamic programming implementation, which avoids recursion by solving the smaller subproblems in a bottom-up manner, before tackling the more challenging optimization problems higher up in the hierarchy, is able to solve (\ref{eq:mincovset}) in an efficient manner for small values of $r$. The pseudocode of such an implementation is described in Algorithm~\ref{alg:dpmincovset}. For every internal node, $r$ local optimization problems are solved by varying the local representation complexity $r_i$ from one to $r$, and the solutions of these optimization problems are stored. By visiting children before their parents get analysed, one can guarantee that all needed quantities are known when the different compositions in an internal node need to be investigated. The most critical step in Algorithm~\ref{alg:dpmincovset} is line 20, where all compositions of the current representation complexity $r_i$ into $|T|$ elements are considered (with $T$ the set of all children that contain at least one element of $S_k$). Strictly speaking, this step has a runtime that is exponential in $r$. However, in practical situations, one would only be interested in sets with a representation complexity lower than five, so Algorithm~\ref{alg:dpmincovset} in general computes the exact solution to (\ref{eq:mincovset}) in an efficient manner.    
  
The pseudocode for calculating the threshold in (\ref{eq:nested:tau}) and set-valued predictions in (\ref{eq:crsvpr}) is presented in Algorithm~\ref{alg:crsvpr:cal} and \ref{alg:crsvpr}. The computational complexity during test time is dominated by the computation of the set of lowest common ancestors in (\ref{eq:mincovset}), which needs to be computed for every $k$ until the stopping criterion in Algorithm~\ref{alg:crsvpr} is satisfied. Furthermore, the worst-case time complexity of calculating the set of lowest common ancestors in Algorithm~\ref{alg:dpmincovset} is upper bounded by $\calO(K^{2}r^{d})$, with $d$ the maximum out-degree of the tree $\calT$. This is small for regimes that are practically useful (fixed $r \leq 3$ and moderate 
$d$).

\section{Experimental results}
\label{sec:results}

\begin{table}[t]
    \caption{Summary statistics and top-1 performance for all datasets. Notation: $K$ -- number of classes, $N$ -- number of samples, Top-1 acc. -- top-1 accuracy of classifier.}
      \label{tab:results:emp}
        \centering
        {\scriptsize
        \begin{tabular}{lcccc|c}
        \toprule
        \textsc{Dataset} & $K$ & $N_{train}$ & $N_{cal}$ & $N_{test}$ & \textsc{Top-1 acc.} \\
        \midrule
        \textsc{CIFAR-10} & 10 & 50000 & 5000 & 5000 & 0.8817 \\
        \textsc{AMB} & 93 & 12781 & 1918 & 1917 & 0.8503 \\
        \textsc{Caltech-101} & 97 & 4338 & 2169 & 2169 & 0.9039 \\
        \textsc{DBpedia} & 219 & 276945 & 30397 & 30397 & 0.9388 \\
        \textsc{Caltech-256} & 256 & 14890 & 7445 & 7445 & 0.7578 \\
        \textsc{PlantCLEF 2015} & 1000 & 91758 & 10723 & 10723 & 0.4156 \\
        \bottomrule
        \end{tabular}
        }
\end{table}

In this section, different set-valued predictors are compared for solving problem (\ref{eq:problem}) across six benchmark datasets, focusing on coverage, efficiency, and representation complexity. Summary statistics for these datasets are presented in Table~\ref{tab:results:emp}. For all datasets, a predefined hierarchy $\calT$ is extracted from the provided hierarchical labels. For detailed information about the experimental setup, we refer to Section~\ref{sec:supp:expsetup} in the technical appendix.

\begin{table*}[t]
  \caption{Results for CIFAR-10, AMB, Caltech-101, DBpedia, Caltech-256 and PlantCLEF 2015. Coverage, efficiency, and representation complexity for the following unrestricted set-valued predictors: LAC, NPS, APS, and restricted set-valued predictors: NCRSVP, CRSVP, NCRSVP-$3$, and CRSVP-$3$. The confidence level is set to 90\%, and calibration and test sets are resampled 10 times.}
  \label{tab:results}
  \centering
  {\scriptsize
  \begin{tabular}{lc|ccccccccccccccc}
      \toprule
      & \textsc{Alg.} & \textsc{LAC} & \textsc{NPS} & \textsc{APS} & \textsc{NCRSVP} & \textsc{CRSVP} & \textsc{NCRSVP-3} & \textsc{CRSVP-3} \\
      \textsc{Dataset} & & & & & & & & & \\
      \midrule
      \multirow{3}{*}{\textsc{CIFAR-10}} & \textsc{Cov.} & $0.899 \pm 0.005$ & $0.997 \pm 0.001$ & $0.899 \pm 0.003$ & $1.000 \pm 0.000$ & $0.899 \pm 0.005$ & $0.997 \pm 0.001$ & $0.899 \pm 0.003$ \\
      & \textsc{Size} & $1.473 \pm 0.023$ & $5.125 \pm 0.058$ & $1.849 \pm 0.019$ & $10.00 \pm 0.000$  & $3.899 \pm 0.049$ & $5.861 \pm 0.064$ & $1.946 \pm 0.025$ \\
      & \textsc{R.C.} & $1.451 \pm 0.021$ & $3.552 \pm 0.017$ & $1.824 \pm 0.015$ & $1.000 \pm 0.000$ & $1.000 \pm 0.000$ & $2.368 \pm 0.008$ & $1.691 \pm 0.009$ \\
      \midrule
      \multirow{3}{*}{\textsc{AMB}} & \textsc{Cov.} 
      & $0.899 \pm 0.010$ & $1.000 \pm 0.000$ & $0.900 \pm 0.009$ & $1.000 \pm 0.000$ & $0.900 \pm 0.009$ & $1.000 \pm 0.000$ & $0.899 \pm 0.009$ \\
      & \textsc{Size} 
      & $1.128 \pm 0.020$ & $23.98 \pm 1.195$ & $1.685 \pm 0.055$ & $50.85 \pm 1.304$ & $4.856 \pm 0.261$ & $39.26 \pm 1.574$ & $2.184 \pm 0.063$ \\
      & \textsc{R.C.} 
      & $1.132 \pm 0.019$ & $17.38 \pm 0.702$ & $1.776 \pm 0.051$ & $1.000 \pm 0.000$ & $1.000 \pm 0.000$ & $1.829 \pm 0.009$ & $1.394 \pm 0.018$ \\
      \midrule
      \multirow{3}{*}{\textsc{Caltech-101}} & \textsc{Cov.} & $0.900 \pm 0.007$ & $1.000 \pm 0.000$ & $0.900 \pm 0.006$ & $1.000 \pm 0.000$ & $0.901 \pm 0.005$ & $1.000 \pm 0.000$ & $0.901 \pm 0.006$ \\
      & \textsc{Size} & $0.920 \pm 0.008$ & $96.54 \pm 0.065$ & $1.165 \pm 0.015$ & $96.25 \pm 0.049$ & $4.400 \pm 0.254$ & $63.61 \pm 0.517$ & $1.784 \pm 0.086$ \\
      & \textsc{R.C.} & $1.000 \pm 0.000$ & $1.454 \pm 0.072$ & $1.251 \pm 0.015$ & $1.000 \pm 0.000$  & $1.000 \pm 0.000$ & $1.306 \pm 0.008$ & $1.133 \pm 0.007$ \\
      \midrule
      \multirow{3}{*}{\textsc{DBpedia}} & \textsc{Cov.} 
      & $0.899 \pm 0.001$ & $1.000 \pm 0.000$ & $0.901 \pm 0.002$ & $0.999 \pm 0.000$ & $0.901 \pm 0.002$ & $0.999 \pm 0.000$ & $0.901 \pm 0.001$ \\
      & \textsc{Size} 
      & $0.931 \pm 0.002$ & $59.17 \pm 0.272$ & $11.33 \pm 0.287$ & $162.9 \pm 0.492$ & $26.62 \pm 0.417$ & $127.9 \pm 0.501$ & $17.71 \pm 0.423$ \\
      & \textsc{R.C.} 
      & $1.000 \pm 0.000$ & $53.18 \pm 0.257$ & $11.90 \pm 0.277$ & $1.000 \pm 0.000$ & $1.000 \pm 0.000$ & $1.985 \pm 0.004$ & $1.380 \pm 0.005$ \\
      \midrule
      \multirow{3}{*}{\textsc{Caltech-256}} & \textsc{Cov.} & $0.900 \pm 0.004$ & $0.999 \pm 0.000$  & $0.901 \pm 0.004$ & $0.999 \pm 0.000$ & $0.900 \pm 0.003$ & $1.000 \pm 0.000$ & $0.901 \pm 0.003$ \\
      & \textsc{Size} & $1.931 \pm 0.040$ & $62.25 \pm 1.330$ & $3.640 \pm 0.099$ & $208.4 \pm 1.780$ & $44.69 \pm 1.252$ & $163.6 \pm 2.070$ & $20.30 \pm 0.830$ \\
      & \textsc{R.C.} & $1.926 \pm 0.040$ & $50.12 \pm 0.996$ &  $3.680 \pm 0.098$ & $1.000 \pm 0.000$ & $1.000 \pm 0.000$ & $1.632 \pm 0.005$ & $1.498 \pm 0.009$ \\
      \midrule
      \multirow{3}{*}{\textsc{PlantCLEF 2015}} & \textsc{Cov.} & $0.899 \pm 0.003$ & $0.970 \pm 0.001$ & $0.899 \pm 0.003$ & $1.000 \pm 0.000$ & $0.900 \pm 0.002$ & $1.000 \pm 0.000$ & $0.899 \pm 0.004$ \\
      & \textsc{Size} & $25.50 \pm 0.450$ & $123.6 \pm 2.026$ & $44.12 \pm 0.994$ & $998.9 \pm 0.224$ & $520.9 \pm 4.745$ & $995.3 \pm 0.492$ & $389.7 \pm 5.898$ \\
      & \textsc{R.C.} & $24.33 \pm 0.419$  & $104.4 \pm 1.560$ & $40.19 \pm 0.836$ & $1.000 \pm 0.000$ & $1.000 \pm 0.000$ & $1.006 \pm 0.001$ & $1.632 \pm 0.010$ \\
      \bottomrule
  \end{tabular}
  }
\end{table*}

In particular, we will compare various set-valued predictors that solve problem (\ref{eq:problem}) across six datasets listed in Table~\ref{tab:results:emp}: CIFAR-10~\citep{krizhevsky10cifar10}, Caltech-101~\citep{li03caltech101}, Caltech-256~\citep{griffin07caltech256}, PlantCLEF 2015~\citep{goeau15lifeclef}, the Allen Mouse Brain (AMB) single-cell transcriptomics dataset~\citep{tasic2018shared} and DBPedia~\citep{ofer19dbpedia}. The first four datasets are image datasets from the computer vision domain. The AMB dataset contains gene expression profiles from the mouse neocortex along with corresponding cell types and a hierarchical structure. The DBpedia text dataset contains extracted structured textual content from Wikipedia. Specifically, we evaluate the following set-valued predictors: CRSVP and CRSVP-$3$. Additionally, we demonstrate the usefulness of randomized prediction sets by considering the following na\"ive (N) set-valued predictors: NCRSVP and NCRSVP-$3$. These correspond to setting $u$ to zero when calculating the threshold in (\ref{eq:nested:tau}) and constructing the prediction sets in (\ref{eq:crsvp}) and (\ref{eq:crsvpr}), respectively. Finally, we include results for three baseline methods that produce valid set-valued predictions in flat classification (i.e.\ ignoring the hierarchy): the least ambiguous classifier (LAC), as proposed by \citet{sadinle19least}; adaptive prediction sets (APS), as defined in (\ref{eq:aps}); and na\"ive prediction sets (NPS), which correspond to APS without randomization (i.e.\ setting $u$ to zero). The results presented below are obtained using a flat classifier. We do not use hierarchical classifiers, as our dynamic programming approach relies on a bottom-up traversal of the tree, and incorporating hierarchical classifiers would increase computational complexity. Moreover, \citet{mortier22svprc} report no significant improvement in predictive performance when using hierarchical classifiers compared to flat ones. Nevertheless, hierarchical classifiers can still be applied within our framework, as discussed in Section~\ref{sec:hierclass}. During inference, we resample the calibration and test sets 10 times and use a confidence level of 90\%. 

The most important results are summarized in Table~\ref{tab:results}. For each experiment, we report (marginal) coverage, efficiency, and representation complexity. Coverage is defined as the proportion of samples for which the true class is included in the prediction set. Efficiency is defined as the average size of the set-valued prediction. The results clearly indicate that na\"ive set-valued predictors fail to deliver prediction sets with exact coverage, thereby highlighting the importance of randomized prediction sets. Moreover, increasing the representation complexity generally improves efficiency, demonstrating its practical value. In extreme cases, when representation complexity is unrestricted, such as with LAC, NPS, and APS, optimal performance in terms of efficiency is observed. However, this comes at the cost of significantly increased representation complexity in the prediction sets, in particular for large $K$, which may not be practical when predictions need to adhere to a predefined hierarchy. 

Note that imprecise predictions for low representation complexity are typically due to uncertainty in the predicted probabilities and/or characteristics of the hierarchical structure, such as inconsistency and depth. Specifically, when uncertainty spans distinct branches of the hierarchy, predictions tend to correspond to higher-level nodes for lower representation complexity. This suggests that the hierarchy may not be well-suited for certain samples. Additionally, the depth of the hierarchical structure also plays a critical role in hierarchical classification tasks with a large number of classes. In such cases, a shallow hierarchy can lead to imprecise predictions for low representation complexity, as internal nodes in shallow trees often have many children. For example, the PlantCLEF 2015 dataset, characterized by 1,000 classes and a shallow hierarchy (i.e.\ taxonomy consisting of a family, genus and species level), requires a higher representation complexity in order to improve efficiency. 
  
Finally, higher representation complexity becomes valuable for datasets with many classes and high uncertainty. In this case, traditional hierarchical classification tends to return imprecise predictions, i.e.\ nodes higher up in the hierarchy. By restricting representation complexity, predictions can span multiple nodes lower in the hierarchy, thereby improving efficiency. This is demonstrated by comparing CRSVP and CRSVP-3 on the PlantCLEF 2015 dataset in Table~\ref{tab:results:emp} and Figure~\ref{fig:repvssize}. Furthermore, Figure~\ref{fig:repvssize} reveals a clear trade-off between representation complexity and efficiency for the PlantCLEF 2015 dataset. In Section~\ref{sec:supp:results} of the technical appendix, we present additional results obtained for (N)CRSVP-$1$ and (N)CRSVP-$2$, and provide additional insights regarding conditional coverage and the relationship between representation complexity and efficiency. 

\begin{figure}[t]
  \centering
\includegraphics[width=0.8\columnwidth]{./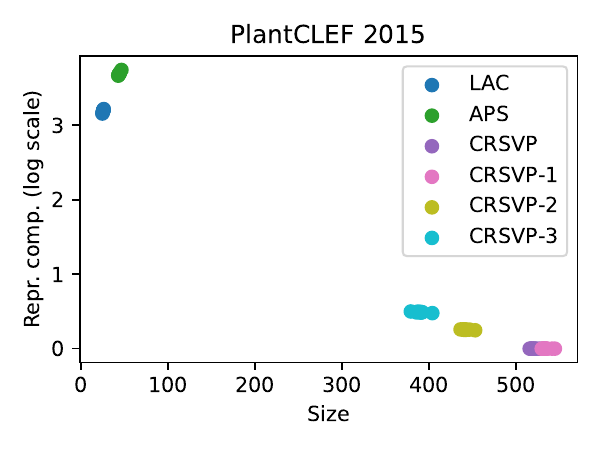}
\caption{Trade-off between representation complexity (log scale) and efficiency for PlantCLEF 2015. The confidence level is set to 90\%, and calibration and test sets are resampled 10 times.}
\label{fig:repvssize}
\end{figure}

\section{Conclusion}
\label{sec:conclusion}

In this work, we extended the split conformal prediction framework to hierarchical classification by introducing two novel set-valued prediction algorithms. The first algorithm generates valid set-valued predictions restricted to single nodes within a predefined hierarchy. We argued that this restriction can be limiting in certain applications. To address this limitation, we introduced a second algorithm that relaxes this constraint by incorporating the notion of representation complexity. Empirical evaluations on multiple benchmark datasets demonstrate the effectiveness of the proposed algorithms in achieving exact nominal coverage. 

The main purpose of bounding representation complexity is to ensure the interpretability and semantic meaningfulness of predictions. However, it may also have other benefits. In particular, we conjecture that it may serve the purpose of regularization and hence increase accuracy in cases where class probabilities are poorly estimated. In such cases, flat prediction sets tend to be scattered across the entire hierarchy, and this scattering is suppressed by bounding representation complexity. We plan to elaborate on this aspect more closely in future work. 
Another interesting direction for future research is to generalize our methods to more complex structures, such as directed acyclic graphs.

\subsubsection*{Acknowledgements}
Yusuf Sale is supported by the DAAD program Konrad Zuse Schools of Excellence in Artificial Intelligence, sponsored by the Federal Ministry of Education and Research. Alireza Javanmardi gratefully acknowledges funding by the Klaus Tschira Stiftung (project 00.019.2024). Willem Waegeman received funding from the Flemish government under the Flanders AI research (FlAIR) program. 

\bibliographystyle{apalike}
\bibliography{main}

\clearpage
\appendix
\thispagestyle{empty}


\onecolumn
\aistatstitle{Conformal Prediction in Hierarchical Classification with Constrained Representation Complexity: \\
Supplementary Materials}


\section{Experimental setup}
\label{sec:supp:expsetup}

We use a MobileNetV2 convolutional neural network \citep{sandler18mobilenetv2} pretrained on ImageNet~\citep{krizhevsky17imagenet}, in order to obtain hidden representations for all image datasets. The cross-entropy loss is minimized using the Adam optimizer, with a learning rate of $1 \times 10^{-5}$ and momentum set to 0.99. We set the number of epochs to 2 and 20, for the Caltech and other datasets, respectively. We train all models end-to-end on a GPU, by using the PyTorch library~\citep{paszke17automatic} and infrastructure with the following specifications:
\begin{itemize} 
\item CPU: Intel i7-6800K 3.4 GHz (3.8 GHz Turbo Boost) 
\item GPU: NVIDIA GTX 1080 Ti 11GB 
\item RAM: 64GB DDR4-2666 
\end{itemize}
For the AMB and DBpedia dataset, we use the same model and training procedure as described in~\citet{theunissen2024uncertainty} and ~\citet{mortier21efficientsvp}, respectively. 
 
\section{Additional results}
\label{sec:supp:results}

In this section, we include additional results to those presented in Table~\ref{tab:results}, namely, results for (N)CRSVP-$1$ and (N)CRSVP-$2$. In addition, we give insights into potential conditional coverage violations by examining the Pearson correlation between coverage and prediction set size. A strong correlation may indicate a potential violation of conditional coverage~\citep{feldman21improving}. While this metric serves as a useful proxy for evaluating conditional coverage, it is not definitive, as a correlation of zero does not necessarily guarantee conditional coverage, as shown in~\citet{rossellini24integrating}. The results are summarized in Table~\ref{tab:supp:extresults1}, \ref{tab:supp:extresults2} and~\ref{tab:supp:extresults3}. 

First, note that CRSVP is different from CRSVP-$1$. The restriction is the same, however, the set of solutions for both algorithms is different: For CRSVP, starting from the mode of the distribution, the solution space is given by the mode and its ancestors. For CRSVP-$1$, the solution space is given by line 7 and 6 in Algorithm~\ref{alg:crsvp} and ~\ref{alg:crsvpr:cal}, respectively, namely, the most common ancestor (\ref{eq:mincovset}) for the top-1, top-2, top-3, etc., classes. In most cases, we observe low correlation for (N)CRSVP and (N)CRSVP-$r$, reflecting strong performance regarding conditional coverage. 

Finally, in Figure~\ref{fig:supp:extresults}, we visually explore the relationship between representation complexity and set size for the unrestricted set-valued predictors (LAC, APS) and restricted set-valued predictors (CRSVP, CRSVP-$r$). As is clear from both Table~\ref{tab:supp:extresults1},~\ref{tab:supp:extresults2},~\ref{tab:supp:extresults3} and Figure~\ref{fig:supp:extresults}, increasing the representation complexity results in better efficiency. 

\begin{table*}[tb]
  \renewcommand{\thetable}{C\arabic{table}}
  \caption{Additional results for CIFAR-10 and AMB. Coverage, efficiency, representation complexity and Pearson correlation values for the following unrestricted set-valued predictors: LAC, NPS, APS, and restricted set-valued predictors: NCRSVP, CRSVP, NCRSVP-$r$, and CRSVP-$r$. The confidence level is set to 90\%, and calibration and test sets are resampled 10 times.}
  \label{tab:supp:extresults1}
  \centering
  {\scriptsize
  \begin{tabular}{c|cccc|cccc}
      \toprule
      \textsc{Dataset} & \multicolumn{4}{c|}{\textsc{CIFAR-10}} & \multicolumn{4}{c}{\textsc{AMB}} \\
      \textsc{Alg.} & \textsc{Cov.} & \textsc{Size} & \textsc{Repr. comp.} & \textsc{Corr.} & 
      \textsc{Cov.} & \textsc{Size} & \textsc{Repr. comp.} & \textsc{Corr.} \\
      \midrule
      \textsc{LAC} & 
      $0.899 \pm 0.005$ & $1.473 \pm 0.023$ & $1.451 \pm 0.021$ & $-0.07 \pm 0.011$ & 
      $0.899 \pm 0.010$ & $1.128 \pm 0.020$ & $1.132 \pm 0.019$ & $0.060 \pm 0.025$ \\
      \textsc{NPS} &
      $0.997 \pm 0.001$ & $5.125 \pm 0.058$ & $3.552 \pm 0.017$ & $0.016 \pm 0.010$ & 
      $1.000 \pm 0.000$ & $23.98 \pm 1.195$ & $17.38 \pm 0.702$ & - \\
      \textsc{APS} &
      $0.899 \pm 0.003$ & $1.849 \pm 0.019$ & $1.824 \pm 0.015$ & $0.111 \pm 0.014$ & 
      $0.900 \pm 0.009$ & $1.685 \pm 0.055$ & $1.776 \pm 0.051$ & $0.177 \pm 0.015$ \\
      \midrule
      \textsc{NCRSVP} &
      $1.000 \pm 0.000$ & $10.00 \pm 0.000$ & $1.000 \pm 0.000$ & - & 
      $1.000 \pm 0.000$ & $50.85 \pm 1.304$ & $1.000 \pm 0.000$ & - \\
      \textsc{CRSVP} &
      $0.899 \pm 0.005$ & $3.899 \pm 0.049$ & $1.000 \pm 0.000$ & $0.134 \pm 0.009$ & 
      $0.900 \pm 0.009$ & $4.856 \pm 0.261$ & $1.000 \pm 0.000$ & $0.137 \pm 0.008$ \\
      \midrule
      \textsc{NCRSVP-1} &
       $1.000 \pm 0.000$ & $10.00 \pm 0.000$ & $1.000 \pm 0.000$ & - &
       $1.000 \pm 0.000$ & $51.13 \pm 1.338$ & $1.000 \pm 0.000$ & - \\
      \textsc{CRSVP-1} &
       $0.898 \pm 0.005$ & $3.580 \pm 0.050$ & $1.000 \pm 0.000$ & $0.207 \pm 0.005$ &
       $0.900 \pm 0.008$ & $4.890 \pm 0.266$ & $1.000 \pm 0.000$ & $0.148 \pm 0.007$ \\
      \midrule
      \textsc{NCRSVP-2} &
      $0.998 \pm 0.001$ & $6.775 \pm 0.047$ & $1.616 \pm 0.004$ & $0.028 \pm 0.009$ &
       $1.000 \pm 0.000$ & $43.35 \pm 1.578$ & $1.511 \pm 0.011$ & - \\
      \textsc{CRSVP-2} &
       $0.899 \pm 0.005$ & $2.279 \pm 0.036$ & $1.457 \pm 0.005$ & $0.105 \pm 0.012$ 
      & $0.900 \pm 0.009$ & $2.666 \pm 0.117$ & $1.248 \pm 0.011$ & $0.124 \pm 0.007$ \\
      \midrule
      \textsc{NCRSVP-3} &
      $0.997 \pm 0.001$ & $5.861 \pm 0.064$ & $2.368 \pm 0.008$ & $0.024 \pm 0.009$ & 
      $1.000 \pm 0.000$ & $39.26 \pm 1.574$ & $1.829 \pm 0.009$ & - \\
      \textsc{CRSVP-3} &
      $0.899 \pm 0.003$ & $1.946 \pm 0.025$ & $1.691 \pm 0.009$ & $0.103 \pm 0.014$ & 
      $0.899 \pm 0.009$ & $2.184 \pm 0.063$ & $1.394 \pm 0.018$ & $0.114 \pm 0.009$ \\
      \bottomrule
  \end{tabular}
  }
\end{table*}

\begin{table*}[tb]
  \renewcommand{\thetable}{C\arabic{table}}
  \caption{Additional results for Caltech-101 and DBpedia. Coverage, efficiency, representation complexity and Pearson correlation values for the following unrestricted set-valued predictors: LAC, NPS, APS, and restricted set-valued predictors: NCRSVP, CRSVP, NCRSVP-$r$, and CRSVP-$r$. The confidence level is set to 90\%, and calibration and test sets are resampled 10 times.}
  \label{tab:supp:extresults2}
  \centering
  {\scriptsize
  \begin{tabular}{c|cccc|cccc}
      \toprule
      \textsc{Dataset} & \multicolumn{4}{c|}{\textsc{Caltech-101}} & \multicolumn{4}{c}{\textsc{DBpedia}} \\
      \textsc{Alg.} & \textsc{Cov.} & \textsc{Size} & \textsc{Repr. comp.} & \textsc{Corr.} & 
      \textsc{Cov.} & \textsc{Size} & \textsc{Repr. comp.} & \textsc{Corr.} \\
      \midrule
      \textsc{LAC} & 
      $0.900 \pm 0.007$ & $0.920 \pm 0.008$ & $1.000 \pm 0.000$ & $0.886 \pm 0.021$ & 
      $0.899 \pm 0.001$ & $0.931 \pm 0.002$ & $1.000 \pm 0.000$ & $0.814 \pm 0.005$ \\
      \textsc{NPS} &
      $1.000 \pm 0.000$ & $96.54 \pm 0.065$ & $1.454 \pm 0.072$ & - & 
      $1.000 \pm 0.000$ & $59.17 \pm 0.272$ & $53.18 \pm 0.257$ & $-0.010 \pm 0.003$ \\
      \textsc{APS} &
      $0.900 \pm 0.006$ & $1.165 \pm 0.015$ & $1.251 \pm 0.015$ & $0.184 \pm 0.013$ & 
      $0.901 \pm 0.002$ & $11.33 \pm 0.287$ & $11.90 \pm 0.277$ & $0.129 \pm 0.001$ \\
      \midrule
      \textsc{NCRSVP} &
      $1.000 \pm 0.000$ & $96.25 \pm 0.049$ & $1.000 \pm 0.000$ & - & 
      $0.999 \pm 0.000$ & $162.9 \pm 0.492$ & $1.000 \pm 0.000$ & $0.019 \pm 0.001$ \\
      \textsc{CRSVP} &
      $0.901 \pm 0.005$ & $4.400 \pm 0.254$ & $1.000 \pm 0.000$ & $0.062 \pm 0.009$ & 
      $0.901 \pm 0.002$ & $26.62 \pm 0.417$ & $1.000 \pm 0.000$ & $0.087 \pm 0.002$ \\
      \midrule
      \textsc{NCRSVP-1} &
       $1.000 \pm 0.000$ & $70.36 \pm 0.550$ & $1.000 \pm 0.000$ & - &
       $1.000 \pm 0.000$ & $183.2 \pm 0.380$ & $1.000 \pm 0.000$ & $0.023 \pm 0.003$ \\
      \textsc{CRSVP-1} &
       $0.900 \pm 0.005$ & $4.173 \pm 0.241$ & $1.000 \pm 0.000$ & $0.079 \pm 0.006$ &
       $0.901 \pm 0.002$ & $25.59 \pm 0.386$ & $1.000 \pm 0.000$ & $0.119 \pm 0.002$ \\
      \midrule
      \textsc{NCRSVP-2} &
      $1.000 \pm 0.000$ & $66.39 \pm 0.528$ & $1.115 \pm 0.004$ & - &
       $0.999 \pm 0.000$ & $150.6 \pm 0.418$ & $1.351 \pm 0.002$ & $0.013 \pm 0.001$ \\
      \textsc{CRSVP-2} &
       $0.900 \pm 0.007$ & $2.381 \pm 0.110$ & $1.085 \pm 0.004$ & $0.054 \pm 0.014$ 
      & $0.901 \pm 0.002$ & $19.34 \pm 0.484$ & $1.191 \pm 0.002$ & $0.107 \pm 0.001$ \\
      \midrule
      \textsc{NCRSVP-3} &
      $1.000 \pm 0.000$ & $63.61 \pm 0.517$ & $1.306 \pm 0.008$ & - & 
      $0.999 \pm 0.000$ & $127.9 \pm 0.501$ & $1.985 \pm 0.004$ & $0.004 \pm 0.001$ \\
      \textsc{CRSVP-3} &
      $0.901 \pm 0.006$ & $1.784 \pm 0.086$ & $1.133 \pm 0.007$ & $0.066 \pm 0.011$ & 
      $0.901 \pm 0.001$ & $17.71 \pm 0.423$ & $1.380 \pm 0.005$ & $0.113 \pm 0.001$ \\
      \bottomrule
  \end{tabular}
  }
\end{table*}

\begin{table*}[t]
  \renewcommand{\thetable}{C\arabic{table}}
  \caption{Additional results for Caltech-256 and PlantCLEF 2015. Coverage, efficiency, representation complexity and Pearson correlation values for the following unrestricted set-valued predictors: LAC, NPS, APS, and restricted set-valued predictors: NCRSVP, CRSVP, NCRSVP-$r$, and CRSVP-$r$. The confidence level is set to 90\%, and calibration and test sets are resampled 10 times.}
  \label{tab:supp:extresults3}
  \centering
  {\scriptsize
  \begin{tabular}{c|cccc|ccccc}
      \toprule
      \textsc{Dataset} & \multicolumn{4}{c|}{\textsc{Caltech-256}} & \multicolumn{4}{c}{\textsc{PlantCLEF 2015}} \\
      \textsc{Alg.} & \textsc{Cov.} & \textsc{Size} & \textsc{Repr. comp.} & \textsc{Corr.} & 
      \textsc{Cov.} & \textsc{Size} & \textsc{Repr. comp.} & \textsc{Corr.} \\
      \midrule
      \textsc{LAC} &
      $0.900 \pm 0.004$ & $1.931 \pm 0.040$ & $1.926 \pm 0.040$ & $-0.29 \pm 0.006$ & 
      $0.899 \pm 0.003$ & $25.50 \pm 0.450$ & $24.33 \pm 0.419$ & $-0.20\pm 0.009$ \\
      \textsc{NPS} &
      $0.999 \pm 0.000$ & $62.25 \pm 1.330$ & $50.12 \pm 0.996$ & $-0.01 \pm 0.009$ & 
      $0.970 \pm 0.001$ & $123.6 \pm 2.026$ & $104.4 \pm 1.560$ & $0.003 \pm 0.006$ \\
      \textsc{APS} &
      $0.901 \pm 0.004$ & $3.640 \pm 0.099$ & $3.680 \pm 0.098$ & $-0.05 \pm 0.008$ & 
      $0.899 \pm 0.003$ & $44.12 \pm 0.994$ & $40.19 \pm 0.836$ & $-0.01 \pm 0.010$ \\
      \midrule
      \textsc{NCRSVP} &
      $0.999 \pm 0.000$ & $208.4 \pm 1.780$ & $1.000 \pm 0.000$ & - & 
      $1.000 \pm 0.000$ & $998.9 \pm 0.224$ & $1.000 \pm 0.000$ & - \\
      \textsc{CRSVP} &
      $0.900 \pm 0.003$ & $44.69 \pm 1.252$ & $1.000 \pm 0.000$ & $0.100 \pm 0.007$ & 
      $0.900 \pm 0.002$ & $520.9 \pm 4.745$ & $1.000 \pm 0.000$ & $0.340 \pm 0.004$\\
      \midrule
      \textsc{NCRSVP-1} & 
       $0.999 \pm 0.000$ & $215.5 \pm 1.885$ & $1.000 \pm 0.000$ & - & 
       $1.000 \pm 0.000$ & $998.9 \pm 0.176$ & $1.000 \pm 0.000$ & - \\
      \textsc{CRSVP-1} &
       $0.901 \pm 0.003$ & $44.24 \pm 1.297$ & $1.000 \pm 0.000$ & $0.139 \pm 0.004$ &
       $0.900 \pm 0.003$ & $534.5 \pm 4.492$ & $1.000 \pm 0.000$ & $0.356 \pm 0.005$ \\
      \midrule
      \textsc{NCRSVP-2} &
      $0.999 \pm 0.000$ & $179.0 \pm 2.035$ & $1.242 \pm 0.004$ & - &
       $1.000 \pm 0.000$ & $997.0 \pm 0.443$ & $1.002 \pm 0.001$ & - \\
      \textsc{CRSVP-2} &
      $0.901 \pm 0.003$ & $27.63 \pm 0.898$ & $1.270 \pm 0.006$ & $0.073 \pm 0.007$ & 
       $0.898 \pm 0.003$ & $442.1 \pm 4.337$ & $1.290 \pm 0.004$ & $0.295 \pm 0.004$ \\
      \midrule
      \textsc{NCRSVP-3} &
      $1.000 \pm 0.000$ & $163.6 \pm 2.070$ & $1.632 \pm 0.005$ & - & 
      $1.000 \pm 0.000$ & $995.3 \pm 0.492$ & $1.006 \pm 0.001$ & - \\
      \textsc{CRSVP-3} &
      $0.901 \pm 0.003$ & $20.30 \pm 0.830$ & $1.498 \pm 0.009$ & $0.045 \pm 0.009$ & 
      $0.899 \pm 0.004$ & $389.7 \pm 5.898$ & $1.632 \pm 0.010$ & $0.262 \pm 0.004$ \\
      \bottomrule
  \end{tabular}
  }
\end{table*}

\begin{figure}[t]
  \renewcommand{\thefigure}{C\arabic{figure}}
  \centering
\includegraphics[width=0.9\textwidth]{./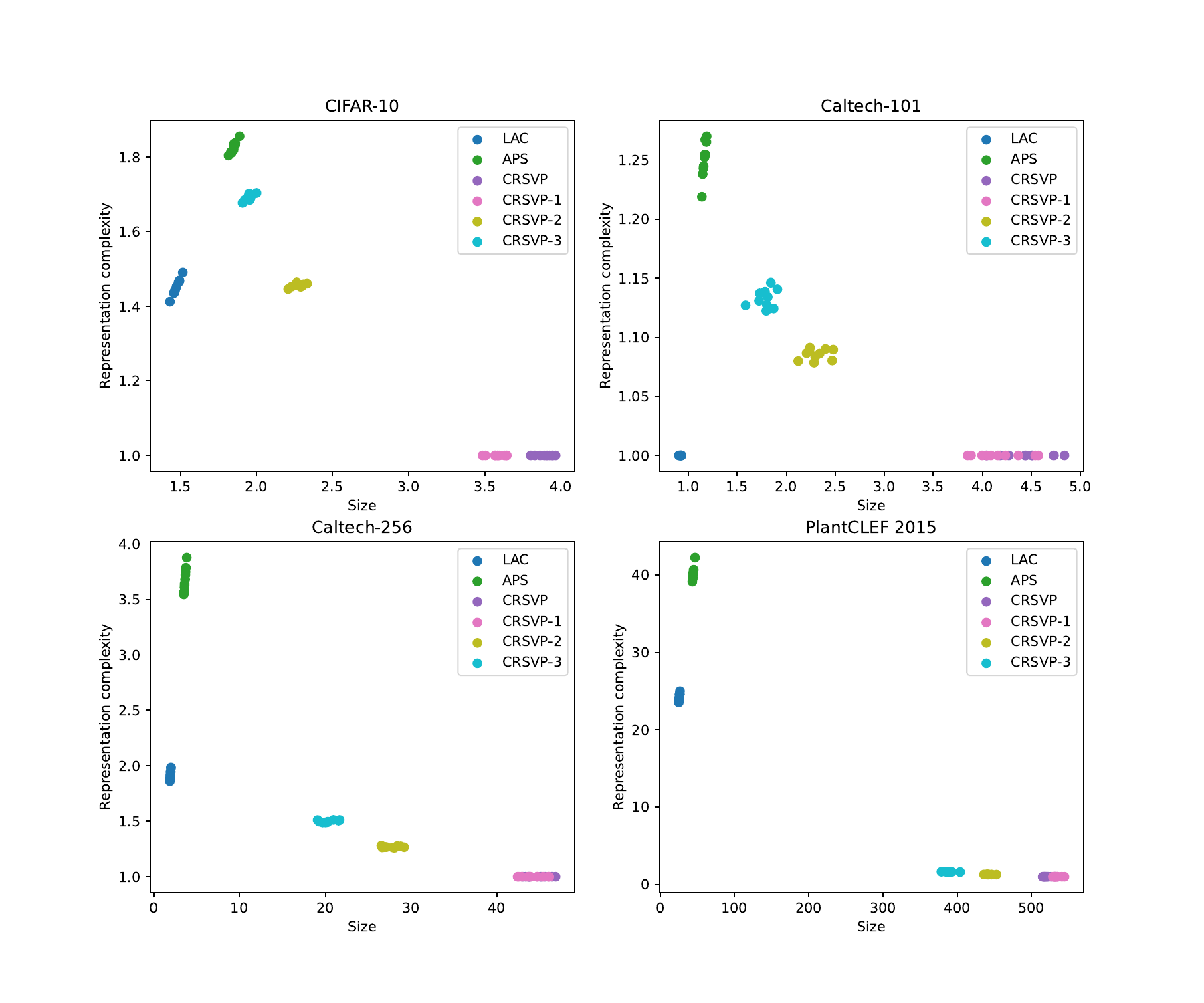}
\caption{Representation complexity versus set size for the following unrestricted set-valued predictors: LAC and APS, and restricted set-valued predictors: CRSVP and CRSVP-$r$, for all datasets. The confidence level is set to 90\%, and calibration and test sets are resampled 10 times.}
\label{fig:supp:extresults}
\end{figure}

\end{document}